\newcommand{\R}{\Rbb}
\newcommand{\e}{\ten}
\renewcommand{\vec}[1]{\ensuremath{\mathbf{#1}}}
\newcommand{\vecs}[1]{\ensuremath{\mathbf{\boldsymbol{#1}}}}
\newcommand{\mat}[1]{\ensuremath{\mathbf{#1}}}
\newcommand{\mats}[1]{\ensuremath{\mathbf{\boldsymbol{#1}}}}
\newcommand{\ten}[1]{\mat{\ensuremath{\boldsymbol{\mathcal{#1}}}}}
\newcommand{\ttm}[1]{\times_{#1}}
\newcommand{\ttv}[1]{\bullet_{#1}}
\newcommand{\tenmat}[2]{\ten{#1}_{(#2)}}
\newcommand{\tenmatgen}[2]{{(#1)}_{\langle\!\langle #2\rangle\!\rangle}}
\newcommand{\TT}[1]{\llbracket #1 \rrbracket}
\xdef\csname \x bb\endcsname{\noexpand\ensuremath{\noexpand\mathbb{\x}}}
\xdef\csname \x cal\endcsname{\noexpand\ensuremath{\noexpand\mathcal{\x}}}
\xdef\csname \x t\endcsname{\noexpand\ensuremath{\noexpand\ten{\x}}}
\xdef\csname \x b\endcsname{\noexpand\ensuremath{\noexpand\mat{\x}}}
\xdef\csname \x b\endcsname{\noexpand\ensuremath{\noexpand\vec{\x}}}
\xdef\csname \x b\endcsname{\noexpand\ensuremath{\noexpand\vec{\x}}}
\newcommand{\A}{\mat{A}}
\newcommand{\B}{\mat{B}}
\newcommand{\Hten}{\ten{H}}
\newcommand{\T}{\ten{T}}
\newcommand{\G}{\ten{G}}
\newcommand{\Y}{\ten{Y}}
\newcommand{\X}{\mat{X}}
\renewcommand{\P}{\mat{P}}
\renewcommand{\S}{\mat{S}}
\newcommand{\x}{\vec{x}}
\newtheorem*{theorem*}{Theorem}
\newtheorem*{corollary*}{Corollary}%
\newtheorem*{proposition*}{Proposition}%
\newtheorem{theorem}{Theorem}[section]
\newtheorem{definition}{Definition}%
\newtheorem*{pbm*}{Problem}%
\newtheorem*{algo*}{Algorithm}%
\newcommand{\vectorize}[1]{\mathrm{vec}(#1)}
\DeclareMathOperator*{\rank}{rank}
\newcommand{\norm}[1]{\|#1\|}
\newcommand{\pinv}{^\dagger}
\newcommand{\nstates}{n}
\newcommand{\szerosymbol}{\alpha}
\newcommand{\szero}{\vecs{\szerosymbol}}
\newcommand{\sinfsymbol}{\omega}
\newcommand{\sinf}{\vecs{\sinfsymbol}}
\DeclareDocumentCommand{\wa}{  O{A} O{\szero} O{\sinf} }%
{(#2,\{\mat{#1}^\sigma\}_{\sigma\in\Sigma},#3)}
\DeclareDocumentCommand{\waR}{  O{A} O{\Rbb^\nstates} O{\szero} O{\sinf} }%
{(#2,#3,\{\mat{#1}^\sigma\}_{\sigma\in\Sigma},#4)}
\newcommand{\vvsinfsymbol}{\Omega}
\newcommand{\vvsinf}{\mats{\vvsinfsymbol}}
\DeclareDocumentCommand{\vvwa}{  O{A} O{\szero} O{\vvsinf} }%
{(#2,\{\mat{#1}^\sigma\}_{\sigma\in\Sigma},#3)}
\newcommand{\tzerosymbol}{\alpha}
\newcommand{\tzero}{\vecs{\tzerosymbol}}
\newcommand{\tinfsymbol}{\omega}
\newcommand{\tinf}{\vecs{\tinfsymbol}}
\DeclareDocumentCommand{\wta}{ O{T} O{\Rbb^\nstates} O{\tzero} O{\tinf} O{\Fcal}}%
{(#2,#3,\{\ten{#1}^g\}_{g\in #5_{\geq 1}},\{#4^\sigma\}_{\sigma\in #5_0})}
\DeclareDocumentCommand{\trees}{g}{\IfNoValueTF{#1}{\mathfrak{T}}{\mathfrak{T}_{#1}}}
\DeclareDocumentCommand{\contexts}{g}{\IfNoValueTF{#1}{\mathfrak{C}}{\mathfrak{C}_{#1}}}
\newcommand{\gwmprod}{\diamond}
\DeclareDocumentCommand{\gwm}{  O{M} O{\Fbb^\nstates}}{(#2, \{\ten{#1}^x\}_{x\in\Sigma})}
\DeclareDocumentCommand{\gwmcirc}{  O{M} O{\Rbb^\nstates}}{(#2, \{\mat{#1}^\sigma\}_{\sigma\in\Sigma})}
\DeclareDocumentCommand{\dgwm}{ O{M} O{\Fbb^\nstates}}{(#2, \{\ten{#1}^x\}_{x\in\Sigma},\gwmprod)}
\xdef\csname \x bb\endcsname{\noexpand\ensuremath{\noexpand\mathbb{\x}}}
\xdef\csname \x cal\endcsname{\noexpand\ensuremath{\noexpand\mathcal{\x}}}
\xdef\csname \x ten\endcsname{\noexpand\ensuremath{\noexpand\ten{\x}}}
\xdef\csname \x mat\endcsname{\noexpand\ensuremath{\noexpand\mat{\x}}}
\xdef\csname \x vec\endcsname{\noexpand\ensuremath{\noexpand\mat{\x}}}
\newcommand{\ie}{i.e.\ }
\newcommand{\eg}{e.g.\ }
\renewcommand{\e}{\vec{e}}
\newcommand{\removelatexerror}{\let\@latex@error\@gobble}
\renewcommand{\P}{\mat{P}}
\renewcommand{\S}{\mat{S}}
\newcommand{\cA}{\mathcal{A}}
\renewcommand{\vec}[1]{\ensuremath{\bm{#1}}}
\DeclareMathAlphabet\mathbfcal{OMS}{cmsy}{b}{n}
\title[Sequential Density Estimation via NCWFAs]{Sequential Density Estimation via \\ Nonlinear Continuous Weighted Finite Automata}
  \author{\Name{Tianyu Li} \Email{tianyu.li@mail.mcgill.ca}\\
  \addr McGill University, MILA
  \AND
  \Name{Bogdan Mazoure} \Email{bogdan.mazoure@mail.mcgill.ca}\\
  \addr McGill University, MILA
  \AND
  \Name{Guillaume Rabusseau} \Email{guillaume.rabusseau@umontreal.ca}\\
  \addr Université de Montréal, DIRO, MILA, CIFAR AI Chair
 }
\begin{document}

\maketitle
\begin{abstract}
Weighted finite automata (WFAs) have been widely applied in many fields. One of the classic problems for WFAs is probability distribution estimation over sequences of discrete symbols. Although WFAs have been extended to deal with continuous input data, namely continuous WFAs (CWFAs)~\citep{li2020connecting}, it is still unclear how to approximate density functions over sequences of continuous random variables using WFA-based models, due to the limitation on the expressiveness of the model as well as the tractability of approximating density functions via CWFAs. In this paper, we propose a nonlinear extension to the CWFA model to first improve its expressiveness, we refer to it as the nonlinear continuous WFAs (NCWFAs). Then we leverage the so-called RNADE method, which is a well-known density estimator based on neural networks, and propose the RNADE-NCWFA model. The RNADE-NCWFA model computes a density function by design. We show that this model is strictly more expressive than the Gaussian HMM model, which CWFA cannot approximate. Empirically, we conduct a synthetic experiment using Gaussian HMM generated data. We focus on evaluating the model's ability to estimate densities for sequences of varying lengths (longer length than the training data). We observe that our model performs the best among the compared baseline methods. 
\end{abstract}
\begin{keywords}
Weighted finite automata, sequential density estimation, neural density estimation.
\end{keywords}
\vspace{-0.25cm}
\section{Introduction}
Many tasks in natural language processing, computational biology, reinforcement learning, and time series analysis rely on learning
with sequential data, \ie, estimating functions defined over sequences of observations from training data.  
Weighted  finite automata~(WFAs) are a powerful and flexible class of models which can efficiently represent such functions.
WFAs are tractable, they encompass a wide range of machine learning models~(they can for example compute any probability distribution defined by a hidden Markov
model~(HMM)~\citep{denis2008rational} and can model the transition and observation behavior of
partially observable Markov decision processes~\citep{thon2015links}) and they offer appealing theoretical guarantees. In particular, 
the so-called 
\emph{spectral
methods} for learning HMMs~\citep{hsu2009spectral}, WFAs~\citep{bailly2009grammatical,balle2014spectral} and related models~\citep{glaude2016pac,boots2011closing}, 
 provide an alternative to Expectation-Maximization (EM) based learning algorithms that is both computationally efficient and 
consistent. 

One of the major applications of WFA is to approximate probability distribution over sequences of discrete symbols. Although the WFA model has been extended to the continuous domain~\citep{li2020connecting,rabusseau2019connecting} as the so-called linear 2-RNN model (or continuous WFA model), approximating density functions for sequential data under continuous domain using this model is not straight-forward, as the model does not guarantee to compute a density function by construction. Moreover, due to the linearity of the model, the continuous WFA model~(CWFA) is not expressive enough to estimate some of the common density functions over sequences of continuous random variables such as a Gaussian hidden Markov model. 

In recent years, neural networks have been widely applied in density estimation and have been proved to be particularly successful. To estimate a density function via neural networks, the neural density estimator need to be flexible enough to represent complex densities but have tractable inference functions and learning algorithms. One particular example of such models is the class of autoregressive models~\citep{uria2016neural, uria2013rnade}, where the joint density is decomposed into a product of conditionals and each conditional is approximated by a neural network. One other type of methods are the so-called flow-based methods~(normalizing flows)~\citep{dinh2014nice, dinh2016density, rezende2015variational}. Flow-based methods transform a base density (e.g. a standard Gaussian) into the target density by an invertible transformation with tractable Jacobian. Although these methods have been used to estimate sequential densities, the sequences often come as fixed length. It is often unclear how to generalize these methods to account for varying length of the sequences in the testing phase, which can be important for some sequential task, such as language modeling for NLP task. Weighted finite automata, on the other hand, are designed to carry out such task under the discrete setting. The question is, how to generalize WFA to approximate density functions over continuous domains. 

In this paper, by extending the classic CWFA model with a (nonlinear) feature mapping and a (nonlinear) termination function, we first propose our nonlinear continuous weighted finite automata (NCWFA) model. Combining this model with the RNADE framework~\citep{uria2013rnade}, we propose RNADE-NCWFA to approximate sequential density functions. The model is flexible as it  naturally generalizes to sequences of varying lengths. Moreover, we show that the RNADE-NCWFA model is strictly more expressive than the Gaussian HMM model. In addition, we propose a spectral learning based algorithm for efficiently learning the parameters of a RNADE-NCWFA. For the empirical study, we conduct synthetic experiments using data generated from a Gaussian HMM model. We compare our proposed spectral learning of  RNADE-NCWFA with HMM learned with the EM algorithm, RNADE with LSTM and RNADE-NCWFA learned with stochastic gradient descent. We evaluate the models' performance through  their log likelihood over sequences of unseen length, meaning the testing sequences are longer than the training sequences, to observe the models' generalization ability. We show that our model outperforms all the baseline models on this metric, especially for long testing sequences. Moreover, the advantage of our model is more significant when dealing with small training sizes and noisy data.


\section{Background}
In this section, we  first introduce  basic tensor algebra. Then we  introduce the continuous weighted finite automata model as well as the RNADE model for density estimation. 
\subsection{Tensor algebra}
We first recall basic definitions of tensor algebra; more details can be found
in~\citep{Kolda09}. 
A {tensor} $\T\in \Rbb^{d_1\times\cdots \times d_p}$ can simply be seen
as a multidimensional array $(\T_{i_1,\cdots,i_p}\ : \ i_n\in [d_n], n\in [p])$. The
{mode-$n$} fibers of $\T$ are the vectors obtained by fixing all
indices except  the $n$th one, \eg $\T_{:,i_2,\cdots,i_p}\in\Rbb^{d_1}$.
The {$n$th mode matricization} of $\T$ is the matrix having the
mode-$n$ fibers of $\T$ for columns and is denoted by
$\tenmat{T}{n}\in \Rbb^{d_n\times d_1\cdots d_{n-1}d_{n+1}\cdots d_p}$.
The vectorization of a tensor is defined by $\vectorize{\T}=\vectorize{\tenmat{T}{1}}$.
In the following $\T$ always denotes a tensor of size $d_1\times\cdots \times d_p$.

The {mode-$n$ matrix product} of the tensor $\T$ and a matrix
$\X\in\Rbb^{m\times d_n}$ is a tensor  denoted by $\T\ttm{n}\X$. It is 
of size $d_1\times\cdots \times d_{n-1}\times m \times d_{n+1}\times
\cdots \times d_p$ and is defined by the relation 
$\Y = \T\ttm{n}\X \Leftrightarrow \tenmat{Y}{n} = \X\tenmat{T}{n}$.
The {mode-$n$ vector product} of the tensor $\T$ and a vector
$\vec{v}\in\Rbb^{d_n}$ is a tensor defined by $\T\ttv{n}\vec{v} = \T\ttm{n}\vec{v}^\top
\in \Rbb^{d_1\times\cdots \times d_{n-1}\times d_{n+1}\times
\cdots \times d_p}$.
%
%
It is easy to check that the $n$-mode product satisfies $(\T\ttm{n}\mat{A})\ttm{n}\mat{B} = \T\ttm{n}\mat{BA}$
where we assume compatible dimensions of the tensor $\T$ and
the matrices $\A$ and $\B$. Given strictly positive integers $n_1,\cdots, n_k$ satisfying
$\sum_i n_i = p$, we use the notation $\tenmatgen{\T}{n_1,n_2,\cdots,n_k}$ to denote the $k$th order tensor 
obtained by reshaping $\T\in\R^{d_1\times \cdots\times d_p}$ into a tensor\footnote{Note that the specific ordering used to perform matricization, vectorization
and such a reshaping is not relevant as long as it is consistent across all operations.} of size 
$(\prod_{i_1=1}^{n_1} d_{i_1}) \times (\prod_{i_2=1}^{n_2} d_{n_1 + i_2}) \times \cdots \times (\prod_{i_k=1}^{n_k} d_{n_1+\cdots+n_{k-1} + i_k}).$

A rank $R$ {tensor train (TT) decomposition}~\citep{oseledets2011tensor} of a tensor 
$\T\in\R^{d_1\times \cdots\times d_p}$ factorizes $\T$ into the product of $p$ core tensors
$\G_1\in\R^{d_1\times R},\G_2\in\R^{R\times d_2\times R},
\cdots, \G_{p-1}\in\R^{R\times d_{p-1} \times R},
 \G_p \in \R^{R\times d_p}$, and is defined\footnote{The classical definition of the TT-decomposition allows the rank $R$ to be different
for each mode, but this definition is sufficient for the purpose of this paper.} by
$\T_{i_1,\cdots,i_p} =  
(\G_1)_{i_1,:}(\G_2)_{:,i_2,:}\cdots 
 (\G_{p-1})_{:,i_{p-1},:}(\G_p)_{:,i_p}
$
for all indices $i_1\in[d_1],\cdots,i_p\in[d_p]$~(here $(\G_1)_{i_1,:}$ is a row vector, where $[d] = \{1, 2, \cdots, d\}$, $(\G_2)_{:,i_2,:}$ is an $R\times R$ matrix, etc.).  We will use the notation $\T = \TT{\G_1,\cdots,\G_p}$
to denote this product. The name of this decomposition comes from the fact that the tensor $\Tt$ is decomposed into a train of lower-order tensors.

\subsection{Continuous weighted finite automata (CWFAs)}
The concept of continuous weighted finite automata (CWFAs) is a generalization of the classic weighted finite automata model to its continuous input case and is also shown to be equivalent to the linear second-order RNN model~\citep{li2020connecting,rabusseau2019connecting}. 
\begin{definition}
A continuous weighted finite automaton with $k$ states (CWFA) is defined by a tuple $A = \langle \vec{\alpha}, \Aten, \mat{\Omega}\rangle$, where $\vec{\alpha} \in \mathbb{R}^k$ is the initial weight, $\Aten\in \mathbb{R}^{k\times d\times k}$ is the transition tensor, and $\mat{\Omega} \in \mathbb{R}^{k\times p}$ is the termination matrix. Let $(\mathbb{R}^d)^*$ denote the set of sequences of size $d$ real-valued vectors. A CWFA computes the following function $f: (\mathbb{R}^d)^* \rightarrow \mathbb{R}^p$:
\begin{equation}
\label{eq:def_cwfa}
    f(\x_1,\x_2,\cdots, \x_n)= (\At\ttv{1}\szero \ttv{2}\x_1)^\top(\At\ttv{2}\x_2)\cdots(\At\ttv{2}\x_n)\mat{\Omega}.
\end{equation}
\end{definition}

To learn the CWFA model,~\citep{li2020connecting} extend the spectral learning algorithm for the classic WFA model~\citep{mohri2012foundations, balle2014method} to its continuous case. The algorithm relies on the Hankel tensor, which is a generalization of the Hankel matrix. 
\begin{definition}
\label{eq: hankel tensor}
For a function  $f:(\R^d)^*\to\R^p$, its Hankel tensor of length $l$, $\Hten_f^l\in \R^{[d]^l \times p}$ is defined by
$(\Hten_f)^{l}_{i_1, \cdots, i_l,:} = f(\e_{i_1},\cdots,\e_{i_l}),$
where $\e_1,\cdots,\e_d$ denotes the  canonical basis  of $\mathbb{R}^d$.
\end{definition}

In practice, to learn the Hankel tensor, one can use gradient descent to minimize the loss function $\mathcal{L}(\x_1,\cdots,\x_l, \vec{y}, \mathbfcal{H}) = \norm{(\x^{\otimes})^\top \tenmatgen{\mathbfcal{H}}{l,1} - \vec{y}}_F^2,$ over a training dataset. Here $\x_1, \cdots, \x_l$ is an input sequence and $\vec{y}$ the corresponding output, and  $\x^{\otimes} = \x_1\otimes \x_2\otimes\cdots\otimes\x_l$. It is is shown in 
\citep{li2020connecting} that the Hankel tensor of a CWFA with finite states can be parameterized by its tensor train form, i.e. $\mathbfcal{H}^{(l)} = \TT{\G_1,\cdots,\G_{l+1}}$. The spectral learning algorithm for CWFA relies on the following theorem~\citep{li2020connecting} showing how to recover the parameters of CWFA from Hankel tensors of the function it computes. 

\begin{theorem}\label{thm:2RNN-SL}
Let $f:(\R^d)^*\to \R^p$ be a function computed by a minimal linear CWFA  with $n$ hidden units and let
$L$ be an integer such that\footnote{It is worth mentioning that such an integer does not always exist. See \citep{li2020connecting} for more details.} $\rank(\tenmatgen{\Hten^{(2L)}_f}{L,L+1}) = n$. Then, for any $\P\in\R^{d^L\times n}$ and $\S\in\R^{n\times d^Lp}$ such that $\tenmatgen{\Hten^{(2L)}_f}{L,L+1} = \P\S,$ the  minimal CWFA computing $f$ is defined by:
$$\szero = (\S\pinv)^\top\tenmatgen{\Hten^{(L)}_f}{L+1}, \ \vvsinf^\top = \P\pinv\tenmatgen{\Hten^{(L)}_f}{L,1},\ \Aten = (\tenmatgen{\Hten^{(2L+1)}_f}{L,1,L+1})\ttm{1}\P\pinv\ttm{3}(\S\pinv)^\top$$

\end{theorem}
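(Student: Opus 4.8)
The plan is to show that the triple produced by the three formulas is related to the parameters $(\szero_\star,\Aten_\star,\mat{\Omega}_\star)$ of the given minimal CWFA by a single invertible change of basis $\M\in\R^{n\times n}$, and then to invoke the invariance of the function computed by a CWFA under such a change of basis. Throughout I write $\mat{A}^{\v}_\star\eqdef\Aten_\star\ttv{2}\v$ for the transition matrix associated with an input $\v\in\R^d$, so that by multilinearity of the definition of $f$ one has $f(\x_1,\dots,\x_m)=\szero_\star^\top\mat{A}^{\x_1}_\star\cdots\mat{A}^{\x_m}_\star\mat{\Omega}_\star$.

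First I would record the factored form of the relevant Hankel tensors. Define the forward matrix $\P_\star\in\R^{d^L\times n}$ whose $(i_1,\dots,i_L)$-th row is $\szero_\star^\top\mat{A}^{\e_{i_1}}_\star\cdots\mat{A}^{\e_{i_L}}_\star$, and the backward matrix $\S_\star\in\R^{n\times d^Lp}$ whose $(i_1,\dots,i_L,j)$-th column is the $j$-th column of $\mat{A}^{\e_{i_1}}_\star\cdots\mat{A}^{\e_{i_L}}_\star\mat{\Omega}_\star$. A direct entrywise check gives the identities
\begin{align*}
\tenmatgen{\Hten^{(2L)}_f}{L,L+1} &= \P_\star\S_\star, & \tenmatgen{\Hten^{(L)}_f}{L+1} &= \S_\star^\top\szero_\star,\\
\tenmatgen{\Hten^{(L)}_f}{L,1} &= \P_\star\mat{\Omega}_\star, & \tenmatgen{\Hten^{(2L+1)}_f}{L,1,L+1} &= \Aten_\star\ttm{1}\P_\star\ttm{3}\S_\star^\top .
\end{align*}
The last identity is the crucial one: reshaping $\Hten^{(2L+1)}_f$ so that the middle symbol is isolated exposes $\Aten_\star$ sandwiched between the forward states (mode $1$) and the backward states (mode $3$).

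Next I use the rank hypothesis. Since $\rank(\tenmatgen{\Hten^{(2L)}_f}{L,L+1})=n$ and this matrix equals $\P_\star\S_\star$ with $\P_\star$ having $n$ columns and $\S_\star$ having $n$ rows, both $\P_\star$ and $\S_\star$ have full rank $n$; the same holds for any admissible factors $\P,\S$. The standard uniqueness of full-rank factorizations then yields an invertible $\M=\S_\star\S\pinv\in\R^{n\times n}$ with $\P=\P_\star\M$ and $\S=\M\inv\S_\star$. Substituting into the three formulas and simplifying with the full-rank identities $\P\pinv\P_\star=\M\inv$, $\S_\star\S\pinv=\M$, $\P_\star\pinv\P_\star=\I$ and $\S_\star\S_\star\pinv=\I$ gives the recovered parameters $\szero=\M^\top\szero_\star$, $\vvsinf^\top=\M\inv\mat{\Omega}_\star$ and $\Aten=\Aten_\star\ttm{1}\M\inv\ttm{3}\M^\top$.

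Finally I would verify the change-of-basis invariance. Contracting $\Aten=\Aten_\star\ttm{1}\M\inv\ttm{3}\M^\top$ with an input $\v$ on mode $2$—which commutes with the mode-$1$ and mode-$3$ products—gives $\mat{A}^{\v}=\M\inv\mat{A}^{\v}_\star\M$, so in the product defining $f$ the inner factors telescope, $\szero^\top\mat{A}^{\x_1}\cdots\mat{A}^{\x_m}\vvsinf^\top=\szero_\star^\top\M\M\inv\mat{A}^{\x_1}_\star\M\cdots\M\inv\mat{A}^{\x_m}_\star\M\M\inv\mat{\Omega}_\star=f(\x_1,\dots,\x_m)$, and the recovered automaton has exactly $n$ states, matching the Hankel rank and hence minimal. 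The main obstacle I anticipate is purely technical: establishing the isolated-middle factorization of $\tenmatgen{\Hten^{(2L+1)}_f}{L,1,L+1}$ and carefully discharging the pseudoinverse identities, all of which hinge on the full-rank conclusions drawn from the rank hypothesis together with minimality.
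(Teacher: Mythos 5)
Your proof is correct. Note, however, that this paper does not prove the statement at all: it is quoted as a background result and deferred to the cited reference \citep{li2020connecting}, so there is no internal proof to compare against. Your argument --- writing the relevant Hankel reshapings as $\P_\star\S_\star$, $\S_\star^\top\szero_\star$, $\P_\star\mat{\Omega}_\star$ and $\Aten_\star\ttm{1}\P_\star\ttm{3}\S_\star^\top$ via the forward/backward matrices, invoking uniqueness of rank-$n$ factorizations to obtain an invertible $\M$ with $\P=\P_\star\M$, $\S=\M\inv\S_\star$, and then telescoping the change of basis --- is exactly the standard spectral-learning proof, and it is essentially the argument given in the cited work; all of your intermediate identities (including the key one isolating $\Aten_\star$ in $\tenmatgen{\Hten^{(2L+1)}_f}{L,1,L+1}$, and the pseudoinverse identities, which are legitimate because $\P_\star$ has full column rank and $\S_\star$ full row rank under the rank hypothesis) check out.
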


\subsection{Real-valued neural autoregressive density estimator (RNADE)}
The real-valued neural autoregressive density estimator (RNADE)~\citep{uria2013rnade} is a generalization of the original neural autoregressive density estimator (NADE)~\citep{uria2016neural} to  continuous variables. 
The core idea of RNADE is to estimate the joint density using the chain rule and approximate each conditional density via neural networks, i.e.
\begin{align}
\label{RNADE}
    p(x_1, \cdots, x_n) &= \prod_{i=1}^n p(x_i|x_{<i})~~\mathrm{with}~~ p(x_i|x_{<i})=p_M(x_i|\theta_i),
\end{align}
where $x_{<i}$ denotes all attributes preceding $x_i\in \mathbb{R}$ in a fixed ordering, $p_M$ is a mixture of $m$ Gaussians with parameters $\theta_i = \{\vec{\beta}_i \in \R^{m}, \vec{\mu}_i\in\R^m, \vec{\sigma}_i\in\R^m\}$. Moreover, we have: $p_M(x_i|\theta_i)=\sum_{j=1}^m\vec{\beta}^{j}_i  \mathcal{N}(x_i| \vec{\mu}^{j}_i, \vec{\sigma}^{j}_i)$, where $\vec{\beta}_{i}^{j}$ denotes the $j$th element of $\vec{\beta}_i$, same for $\vec{\mu}_i^j$ and $\vec{\sigma}_i^j$ and $\mathcal{N}(x_i|\vec{\mu}_i^j, \vec{\sigma}_i^j)$ denotes the Gaussian density with mean $\vec{\mu}_i^j$ and standard deviation $\vec{\sigma}_i^j$ evaluated at $x_i$. Note that $\vec{\beta}_i, \vec{\mu}_i, \vec{\sigma}_i$  are functions of $x_{<i}$. These functions are often chosen to be various forms of neural networks. In the classic setting, RNADE with $m$ mixing components and $k$ hidden states has the following update rules: 
\begin{align}
    \vec{h}_i &= g_i(\vec{h}_{i-1}) ,~~~\vec{\beta}_i = \mathrm{softmax}(\mat{V}^{\beta}_i\vec{h}_{i-1} + \vec{b}_i^\beta) \label{eq:state}\\
    \vec{\mu}_i &= \mat{V}^{\mu}_i\vec{h}_{i-1} + \vec{b}_i^\mu \label{eq:mu},~~~~\vec{\sigma}_i = \mathrm{exp}(\mat{V}^{\sigma}_i\vec{h}_{i-1} + \vec{b}_i^\sigma), 
\end{align}
where $\mat{V}^{\beta}_i, \mat{V}^{\mu}_i$ and $\mat{V}^{\sigma}_i$ are $m\times k$ matrices, $\vec{b}_i^\beta, \vec{b}_i^\mu$ and $\vec{b}_i^\sigma$ are vectors of size $m$, and $g_i$ is an update function for the hidden state which is time step dependent~(see~\citep{uria2013rnade} for more details on the specific update functions used in the original RNADE formulation). The softmax function~\citep{bridle1990probabilistic} ensures the mixing weights $\vec{\beta}$ are positive and sum to one and the exponential ensures the variances are positive. 
RNADE is trained to minimize the negative log likelihood: $\mathcal{L}(x_1\cdots x_n, \theta_i) = -\sum_{i=1}^n  \mathrm{log}(p_M(x_i|\theta_i))$ via gradient descent.

\section{Methodology}
To approximate density functions with CWFA, we need to improve the expressivity of the model and  constrain it to compute a valid density function. In this section, we first introduce nonlinear continuous weighted finite automata. Then, we  present RNADE-NCWFA for sequential density approximation, which combines CWFA with the RNADE framework. In the end, we show that RNADE-NCWFA is strictly more expressive than Gaussian HMM and present our spectral learning based algorithm for learning RNADE-NCWFA. 
\subsection{Nonlinear Continuous Weighted Finite Automata (NCWFAs)}
To leverage CWFAs to estimate density functions, we first need to improve the expressivity of the model. We will do so by introducing a nonlinear feature map as well as a nonlinear termination function. We hence propose the nonlinear continuous weighted finite automata (NCWFA) model as the following:
\begin{definition}
A nonlinear continuous weighted finite automaton (NCWFA) is defined by a tuple $\Tilde{A} = \langle \vec{\alpha}, \xi, \phi, \cA\rangle$, where $\vec{\alpha} \in \mathbb{R}^k$ is the initial weight, $\phi: \mathbb{R}^d \rightarrow \mathbb{R}^{d^\prime}$ is the feature map, $\xi: \mathbb{R}^k \rightarrow \mathbb{R}^p$ is the termination function and $\cA\in \mathbb{R}^{k\times d\times k}$ is the transition tensor. Given a sequence $\x_1, \cdots, \x_l$, the function that the NCWFA $\Tilde{A}$ computes is: 
\begin{align}
    \vec{h_0} &= \vec{\alpha},~~~~\vec{h_t} = \cA\bullet_1\vec{h}_{t-1}\bullet_2\phi(\x_{t}),~~~~f_{\Tilde{A}}(\x_1, \cdots, \x_l) = \xi(\vec{h}_{l}).
\end{align}
\end{definition}
One immediate observation is that we can exactly recover the definition of a CWFA  by letting $\phi(\x_i) =\x_i$ and $\xi(\vec{h}) = \vec{h}^\top \mat{\Omega}$.

\subsection{Density Estimation with NCWFAs}
The second problem to tackle is that we need to constrain the NCWFA so that it can tractably compute a density function. In this section, we will leverage the  RNADE method to propose the RNADE-NCWFAs model. The proposed model is flexible and can compute sequential densities of arbitrary sequence length. Moreover, we will show that this model is strictly more expressive than the classic Gaussian HMM model. 

Recall the core idea of RNADE is to estimate the joint density using the chain rule as in~Equation~\ref{RNADE}. Instead of approximating the conditionals via the classic RNADE treatment as in~Equations~\ref{eq:state}, we use an NCWFA $\Tilde{A} = \langle \vec{\alpha}, \xi, \phi, \mathbfcal{A}\rangle$, i.e., $p(\x_i|\x_{<i}) = f_{\Tilde{A}}(\x_1, \cdots, \x_i)$. One key difference with the classic RNADE model is that the state update function is independent of the time step, allowing the model to generalize to sequences of arbitrary lengths. 
However, an NCWFA does not readily compute a density function, as the function does not necessarily integrates to one and the output is non-negative. 
To overcome this issue, we adopt the approach used in RNADE  by constraining the output of the NCWFA to be a mixture of Gaussians with diagonal covariance matrices: 
\vspace{-0.25cm}
\begin{align}
    \phi(\x_i) &= \mathrm{\tanh}(\x_i^\top \mat{W}),~~\vec{h}_i = \mathbfcal{A}\ttv{1}\vec{h}_{i-1}\bullet_2 \phi(\x_{i}),~~\vec{\beta}_i = \mathrm{softmax}(\mat{V}^{\beta}_i\vec{h}_{i-1} + \vec{b}_i^\beta)  \label{eq:phi_RNADE-NCWFA}\\
    \mat{M}_i &= \mathbfcal{V}^{\mu} \ttv{1}\vec{h}_{i-1} + \mat{B}^\mu,~~\mat{\Sigma}_i = \mathrm{exp}(\mathbfcal{V}^{\sigma}\ttv{1}\vec{h}_{i-1} + \mathbfcal{B}^\sigma)\label{eq: mu, sigma}\\
    \xi(\x_i, \vec{h}_{i-1}) &= \sum_{j=1}^m \vec{\beta}_i^{j} \mathcal{N}(\x_i|\mat{M}_i^{j}, \mathrm{diag}(\mat{\Sigma}_i^j)),~~~~f_{\Tilde{A}}(\x_1, \cdots, \x_l) = \xi(\x_l, \vec{h}_{l-1}) \label{eq:xi_RNADE-NCWFA}
\end{align}
where $\vec{h}_0 = \vec{\alpha}, \mathbfcal{V}^{\mu} \in\mathbb{R}^{k\times m\times d}, \mathbfcal{V}^{\sigma} \in\mathbb{R}^{k\times m\times d}, \mat{B}^\mu\in\mathbb{R}^{m\times d}, \mathbfcal{B}^\sigma\in\mathbb{R}^{m\times d}$, $\vec{\mu}_i^j = (\mat{M}_i)_{:, j}\in \R^{d}, \mat{\Sigma}_i^{j} = (\mat{\Sigma}_i)_{:, j} \in\R^{d}$. $\mathrm{diag}$ is defined to be $\mathrm{diag}(\mat{\Sigma}^j_i) = (\mat{\Sigma}^{j}_i \otimes \vec{1})\circ \mat{I}$, where $\circ$ denotes the Hadamard product, $\vec{1}\in \mathbb{R}^d$ is an all one vector and $\mat{I}\in\mathbb{R}^{d\times d}$ is an identity matrix. For simplicity, we let $d^\prime = d$ and approximate each conditional via a mixture of Gaussian with diagonal covariance matrix. This can be changed to a full covariance matrix, should the corresponding assumption (positive semi-definite) of the matrix is satisfied. Note this simplification does not affect the expressiveness of the model, as a GMM with a diagonal covariance matrix is also an universal approximator for densities and can approximate a GMM with a full covariance matrix~\citep{benesty2008springer}, given enough states. Under this definition, it is easy to show that $\prod_{i=1}^l f_{\Tilde{A}}(\x_{\leq i})$ computes the density of the sequence $\x_{\leq l}$, where $\x_{\leq l}$ denotes $\x_1, \cdots, \x_l$. We will refer to this NCWFA model with RNADE structure as RNADE-NCWFA of $k$ states and $m$ mixtures. Note although the definitions of $\vec{\beta}_i$, $\mat{M}_i$ and $\mat{\Sigma}_i$ takes specific forms, in practice, one can use any differentiable function of $\vec{h}_i$ to compute $\vec{\beta}_i$, $\mat{M}_i$ and $\mat{\Sigma}_i$, so long as $\vec{\beta}_i$ sums to one and $\mat{\Sigma}_i$ is positive. 

One natural question to ask is how expressive this model is. We  show in the following theorem (proof in Appendix~\ref{apd:first}) that RNADE-NCWFA is strictly more expressive than Gaussian HMMs, which is well known for sequential modeling~\citep{bilmes1998gentle}. 
\begin{theorem} Given a Gaussian HMM with $k$ states $\eta = \langle \vec{\mu}, \mat{T}, O\rangle$, where $O: \mathbb{R}^k\times\mathbb{R}^d \rightarrow \mathbb{R}^+$ is the Gaussian emission function, $\mu \in \mathbb{R}^k$ is the initial state distribution and $\mat{T}\in [0,1]^k$ is the transition matrix, there exists a $k$ states $k$ mixtures RNADE-NCWFA $\Tilde{A} = \langle \vec{\alpha}, \xi, \phi, \mathbfcal{A}\rangle$ with full covariance matrices such that the density function over all possible trajectories generated by $\eta$ can be computed by $\Tilde{A}$: $p^{\eta}(\vec{o}_1 \cdots \vec{o}_n) = \prod_{i=1}^{n} f_{\Tilde{A}}(\vec{o}_{\leq n})$
for any trajectory $\vec{o}_1 \cdots \vec{o}_n$. Moreover, there exists a RNADE-NCWFA $\Tilde{A}$ such that no Gaussian HMM model can compute its density. 
\end{theorem}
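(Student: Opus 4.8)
The plan is to prove the two directions separately: first realize an arbitrary Gaussian HMM as an RNADE-NCWFA by making the hidden state carry the HMM forward messages, and then exhibit a single RNADE-NCWFA whose one-step conditionals cannot arise from any Gaussian HMM. For the containment direction I would exploit the forward-algorithm form of the HMM joint, $p^{\eta}(\vec{o}_{\leq n}) = \vec{1}^\top \mathrm{diag}(O(\cdot,\vec{o}_n))\,\mat{T}^\top\cdots\mat{T}^\top\,\mathrm{diag}(O(\cdot,\vec{o}_1))\,\mu$, and match the NCWFA recursion to it. Concretely, I would set $\vec{\alpha}=\mu$, choose the feature map $\phi:\mathbb{R}^d\to\mathbb{R}^k$ coordinatewise as $\phi(\vec{o})_c = O(c,\vec{o})$ (the $c$-th Gaussian emission density), and define the transition tensor by $\mathbfcal{A}_{s',c,s}=\mat{T}_{s',s}$ if $c=s'$ and $0$ otherwise, so that the bilinear update $\vec{h}_i = \mathbfcal{A}\bullet_1\vec{h}_{i-1}\bullet_2\phi(\vec{o}_i)$ reproduces $\vec{h}_i = \mat{T}^\top\mathrm{diag}(O(\cdot,\vec{o}_i))\vec{h}_{i-1}$. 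This uses the freedom in choosing $\phi$ and $\xi$ granted by the NCWFA definition and the remark following Eq.~\eqref{eq:xi_RNADE-NCWFA}; the literal $\tanh$ feature map is not needed. For the output I would take the mixture weights to be the normalized state $\vec{\beta}_i = \vec{h}_{i-1}/(\vec{1}^\top\vec{h}_{i-1})$ and the $k$ mixture components to be exactly the emission Gaussians $O(s,\cdot)$ with their full covariances, so that $\xi$ computes $\sum_s\beta_i^s\,O(s,\vec{o}_i)$.

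I would then verify the identity by a short telescoping argument. Placing the transition after the emission as above avoids any invertibility assumption on $\mat{T}$: using $\mat{T}\vec{1}=\vec{1}$ one gets $\vec{1}^\top\vec{h}_i = p^{\eta}(\vec{o}_{\leq i})$ by induction, hence $f_{\Tilde{A}}(\vec{o}_{\leq i}) = \vec{1}^\top\mathrm{diag}(O(\cdot,\vec{o}_i))\vec{h}_{i-1}/(\vec{1}^\top\vec{h}_{i-1}) = \vec{1}^\top\vec{h}_i/\vec{1}^\top\vec{h}_{i-1}$, and the product over $i$ telescopes to $\vec{1}^\top\vec{h}_n/\vec{1}^\top\vec{h}_0 = p^{\eta}(\vec{o}_{\leq n})$ since $\vec{1}^\top\mu=1$. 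Because the $\vec{h}_i$ remain nonnegative, the weights $\vec{\beta}_i$ are nonnegative and sum to one, so each $f_{\Tilde{A}}(\vec{o}_{\leq i})$ is a genuine Gaussian-mixture conditional density and $\Tilde{A}$ is a valid $k$-state, $k$-mixture RNADE-NCWFA.

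For strictness I would construct a minimal witness, e.g. $d=k=m=1$ with $\phi=\tanh$, a nonzero transition scalar, and a genuinely state-dependent mean ($\mathbfcal{V}^\mu\neq 0$), so that the length-two conditional $f_{\Tilde{A}}(o_1,o_2)=\mathcal{N}(o_2\mid \mu_2(o_1),\sigma^2)$ is a single Gaussian whose mean $\mu_2(o_1)$ varies continuously and non-constantly with $o_1$. Assuming for contradiction that some Gaussian HMM of arbitrary finite size computes the same joint, its length-one and length-two densities would match those of $\Tilde{A}$; since Gaussian marginals are everywhere positive the conditionals match, and the HMM conditional $p(o_2\mid o_1)$ is necessarily a mixture of a \emph{fixed} finite family of Gaussians with $o_1$-dependent weights. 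The key step is then to invoke identifiability of finite Gaussian mixtures: a single Gaussian can equal such a mixture only if its mean coincides with one of the finitely many fixed component means, forcing $\mu_2(o_1)$ into a finite set and contradicting its continuous range.

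The routine parts are the bilinear bookkeeping in the first direction and the elementary range argument in the second. I expect the two genuine obstacles to be (i) arranging the initialization and transition ordering so that the telescoping product equals the HMM joint \emph{exactly} without assuming $\mat{T}$ invertible, which I handle by normalizing only at the output and placing the transition after the emission, and (ii) in the strictness direction, ruling out HMMs with \emph{arbitrarily many} states, which cannot be done by parameter counting and instead rests on the finite-mixture identifiability theorem.
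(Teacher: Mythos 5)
Your proof is correct, and it takes a genuinely different route from the paper's --- in the containment direction, a sounder one. The paper starts from the factorization $p^{\eta}(\vec{o}_1\cdots\vec{o}_n)=O(\vec{m}^\top,\vec{o}_1)\,O(\vec{m}^\top\mat{T},\vec{o}_2)\cdots O(\vec{m}^\top\mat{T}^{n-1},\vec{o}_n)$ and mirrors it with the constant feature map $\phi(\x)=[\tfrac1k,\ldots,\tfrac1k]^\top$, so that the automaton's state evolves as $\vec{m}^\top\mat{T}^{i-1}$ independently of the observations. That factorization is the product of per-step \emph{marginals}: in a true HMM the mixture weights for $\vec{o}_i$ are the posteriors $p(s_i\mid \vec{o}_{<i})$, not the prior marginals $\vec{m}^\top\mat{T}^{i-1}$ (take $\mat{T}=\mat{I}$ with well-separated emission means: the joint is strongly correlated, the product of marginals is not). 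Your belief-state construction --- $\phi(\vec{o})=(O(s,\vec{o}))_s$, a selection-type transition tensor implementing $\vec{h}_i=\mat{T}^\top\mathrm{diag}(O(\cdot,\vec{o}_i))\vec{h}_{i-1}$, mixture weights equal to the normalized forward messages, and a telescoping product --- computes the true joint for an arbitrary stochastic $\mat{T}$, needing only the paper's remark after Eq.~\ref{eq:xi_RNADE-NCWFA} to license $\vec{\beta}(\vec{h})=\vec{h}/(\vec{1}^\top\vec{h})$ as an admissible weight function; it is exactly the repair the paper's argument needs. For strictness, the paper's witness is a ``shifting'' Gaussian whose time-$i$ mean equals $i$, and its impossibility argument (an ``overdetermined linear system'') is loose as stated: for fixed $\mat{T},\vec{m}$ the functionals $\vec{\mu}\mapsto\vec{m}^\top\mat{T}^{i-1}\vec{\mu}$ span at most $k$ dimensions, and the real obstruction is that $\vec{m}^\top\mat{T}^{i-1}$ is a probability vector, so these means stay in the bounded convex hull of the emission means while the target means diverge. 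Your witness is bounded ($\tanh$-modulated conditional mean) and your obstruction is identifiability of finite Gaussian mixtures (Yakowitz--Spragins), which cleanly rules out Gaussian HMMs with arbitrarily many states --- the point you rightly flag as the one that parameter counting cannot handle. Both witnesses succeed once their respective obstructions are made precise: the paper's buys a one-line boundedness argument, yours buys an argument that never needs unboundedness and pinpoints the structural reason (a continuum of conditional means versus a finite set of emission means), at the cost of invoking the identifiability theorem.
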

Note that a CWFA cannot compute the density function of a Gaussian mixture. Indeed, the function computed by a CWFA on a sequence of length $1$ is linear in its input, whereas a RNADE-NCWFA associate such an input to a Gaussian density.

To learn RNADE-NCWFA, we want to maximize the likelihood given some training set   $D_l=\{\x^1_{\leq l}, \cdots , \x^N_{\leq l}\}$ of length-$l$ sequences of $d$ dimensional vectors, i.e., $\x^n_{\leq l} = \x^n_1,\cdots, \x^n_l$, where each $\x^n_i\in\R^d$. More specifically, we want to minimize the negative log likelihood function:
$\mathcal{L}(\Tilde{A}, D) = -\sum_{i=1}^N \sum_{j=1}^l \mathrm{log}(f_{\Tilde{A}}(\x^i_{\leq j})).$
One straight-forward solution is to use gradient descent to optimize this loss function. However, as pointed out in~\citep{bengio1994learning}, due to repeated multiplication by the same transition tensor, gradient descent is prone to suffer from the vanishing gradient problem and to fail in capturing long term dependencies. One alternative is the classic spectral learning algorithm for WFAs. Recall that the spectral learning method for CWFA requires to first learn Hankel tensors of length $L$, $2L$ and $2L+1$ and then perform a rank factorization on the learnt Hankel tensor to recover the CWFA parameters~(see~\citep{li2020connecting}). However, due to the nonlinearity added to the model, namely the feature map $\phi$ and the termination function $\xi$, spectral learning alone will not be enough. To circumvent this issue, we present an algorithm jointly leveraging gradient descent and spectral learning. The idea is to first learn the Hankel tensors of various length and the function $\phi$ and $\xi$ using gradient descent. Then we use the spectral learning algorithm to recover the transition tensor and the initial weights. 

Let $\delta$ and $\omega$ denote the parameters of the mappings $\phi$ and $\xi$, respectively~(see Eq.~\ref{eq:phi_RNADE-NCWFA}-\ref{eq:xi_RNADE-NCWFA}), and let $\Hten^{(l)}_{\Tilde{A}} = \TT{\Gten^{(l)}_1,\cdots,\Gten^{(l)}_{l}}$ be the TT form of the Hankel tensor,  where $\Gten^{(l)}_1\in\R^{d\times k}$ and $\Gten^{(l)}_i\in\R^{k\times d\times k}$ for $i = 2, \cdots, l$. The spectral learning method for RNADE-NCWFAs first involves an approximation of the Hankel tensor via minimizing the following loss function: 
\begin{equation}
\label{loss: density loss}
\mathcal{L}(\delta, \omega,\Gten^{(l)}_1,\cdots,\Gten^{(l)}_{l}, D_l) = -\sum_{i=1}^N\sum_{j=1}^l \mathrm{log}\left[\xi\left(\psi(\x^i_{\leq j})^\top \tenmatgen{\TT{\Gten^{(l)}_1,\cdots,\Gten^{(l)}_{j}}}{j,1}\right)\right] 
\end{equation}
where $\psi(\x_{\leq j}) = \phi(\x_1)\otimes\cdots\otimes\phi(\x_j)$.
In this process, we have obtained the Hankel tensors and the parameters of the termination function and the feature map. Then, one can perform a rank factorization on the learned Hankel tensor and recover the rest of the parameters for the RNADE-NCWFA, namely $\vec{\alpha}, \cA$. The detailed algorithm is presented in Algorithm~\ref{alg:NCWFA-SL}.

\begin{algorithm}[H]
   \caption{\texttt{NCWFA-SL}: Spectral Learning of RNADE-NCWFA}
   \label{alg:NCWFA-SL}
\begin{algorithmic}[1]
   \REQUIRE Three training datasets $D_L,D_{2L},D_{2L+1}$ with input sequences of length $L$, $2L$ and $2L+1$ respectively, an encoder $\phi_{\delta}$, a termination function $\xi_{\omega}$ and TT-parameterized Hankel tensors $\Hten^{(L)}_{\Tilde{A}}$, $\Hten^{(2L)}_{\Tilde{A}}$ and $\Hten^{(2L+1)}_{\Tilde{A}}$, learning rate $\gamma$, desired rank $R$
   \WHILE{Model not converging}
   \FOR{$l\in\{L,2L,2L+1\}$}
       \STATE Update $\delta, \omega, \Gten^{(l)}_1, \cdots, \Gten^{(l)}_l$ via gradient descent by minimize the loss function~\ref{loss: density loss}:
       \FOR{$\theta \in \{\delta, \omega, \Gten^{(l)}_1, \cdots, \Gten^{(l)}_l\}$}
       \STATE $$\theta \leftarrow \theta - \gamma \nabla_{\theta} \mathcal{L}(\delta, \omega,\Gten^{(l)}_1,\cdots,\Gten^{(l)}_{l}, D_l)$$
\ENDFOR
   \ENDFOR
   \ENDWHILE
   
   \STATE\label{alg.line.svd} Let $\tenmatgen{\Hten^{(2L)}_{\Tilde{A}}}{L,L+1} = \P\S$ be a rank $R$ factorization.
   \STATE Return the RNADE-NCWFA $\Tilde{A} = \langle \vec{\alpha}, \xi_\omega, \phi_\delta, \mathbfcal{A}\rangle$ where 
   \begin{align*} 
    \vec{\alpha} &= (\S\pinv)^\top\tenmatgen{\Hten^{(L)}_{\Tilde{A}}}{L+1},~~~~~~~~\Aten= (\tenmatgen{\Hten^{(2L+1)}_{\Tilde{A}}}{L,1,L+1})\ttm{1}\P\pinv\ttm{3}(\S\pinv)^\top
\end{align*}
\end{algorithmic}
\end{algorithm}
\vspace{-0.5cm}




\section{Experiments}
For the experiments, we conduct a synthetic experiment based on data generated from a random 10-states Gaussian HMM. We sample sequences of length 3, 6 and 7 from the HMM. To evaluate the model's performance on its generalization ability to unseen length of sequence, we sample 1,000 sequences from length 8 to length 400 from the same HMM for the test data. To test the model's resistance to noise, we inject the training samples with Gaussian noise of different standard deviations  (0.1 and 1.0) with 0 mean.

For the baseline models, we have HMM learned with expectation maximization (EM) method, as it can computes density of sequences of any length by design.  We also modified the RNADE model by replacing the hidden states update rule~\ref{eq:state} with an LSTM structure to give the RNADE model the ability to generalize to sequences of arbitrary length, regardless of the length of the training sequences. We refer to this model as RNADE-LSTM. For our model, by following Algorithm~\ref{alg:NCWFA-SL}, we have the method RNADE-NCWFA (spec). Alternatively, although we have mentioned that training the (RNADE-)NCWFA model through pure gradient descent method can have many issues, we also list this approach of training RNADE-NCWFA as one of the baselines, referred to as RNADE-NCWFA (sgd). 
For all the training processes, if gradient descent is involved, we always use Adam optimizer~\citep{kingma2014adam} with 0.001 learning rate with early stopping. For HMM as well as RNADE-NCWFA models, we set the size of the model to be 10 (ground truth of the randomly HMM). For RNADE-LSTM, we set the size of the hidden states to be 10. We present the trend of the averaged log likelihood ratio with the ground truth likelihood, i.e. $\mathrm{log}(\frac{\mathrm{predicted~likelihood}}{\mathrm{ground~truth}})$ w.r.t. length of the sequences over 10 seeds in Figure~\ref{fig:foobar} and a snapshot of the log likelihood for each model of 400 length testing sequences in Appendix~\ref{apd:second}. 

From the experiment results, we can see that RNADE-CWFA (spec) consistently has the best performance across all training sizes and levels of noise injected. More precisely, this advantage is more significant when given small training sizes and (or) the data is injected with high noise. Moreover, the spectral learning algorithm shows stable training results as the standard deviation of the log likelihood (ratio) is the lowest among all methods. This is especially the case when not enough training samples are provided. In addition, one can see that this advantage is consistent with all test sequence lengths we have experimented. 
\begin{figure}[t]
    \vspace{-0.5cm}
    \centering
    \includegraphics[width=\textwidth]{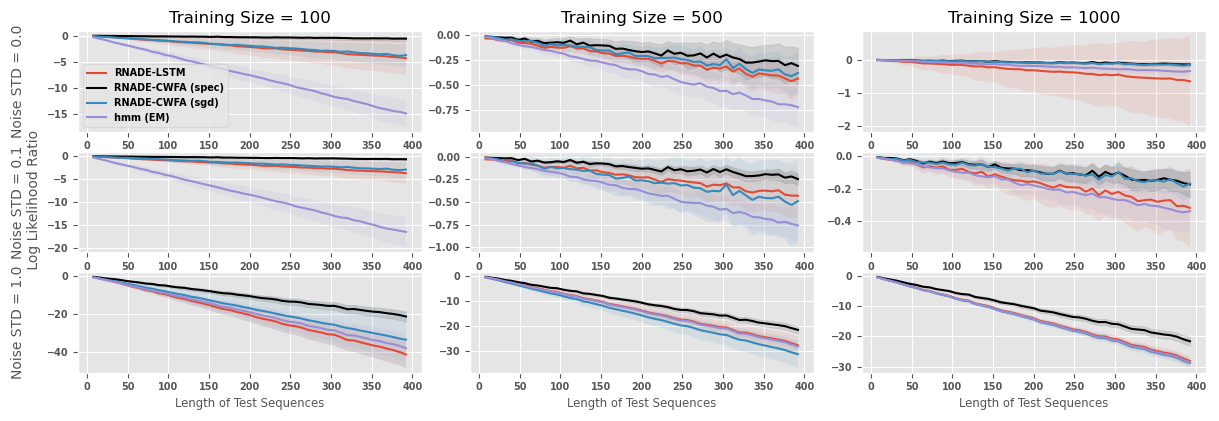}
    \vspace{-0.25cm}
    \caption{Log likelihood ratios between the tested models and the ground truth likelihood. We show the trend w.r.t. the length of the testing sequences under different sample sizes (columns) and standard deviations of the injected noise (rows).}
    \label{fig:foobar}
\vspace{-1cm}
\end{figure}

\vspace{-0.25cm}
\section{Conclusion and Future Work}
In this paper, we propose the RNADE-NCWFA model, an expressive and tractable WFA-based density estimator over sequences of continuous vectors. We extend the notion of continuous WFA to its nonlinear case by introducing a nonlinear feature mapping function as well as a nonlinear termination function. We then combine the idea from RNADE to propose our density estimation model RNADE-NCWFA and its spectral learning based learning algorithm. In addition, we show that theoretically, RNADE-NCWFA is strictly more expressive than the Gaussian HMM model. We show that, empirically, our method has great capability of generalize to sequences of varying length, which is potentially not the same as the training sequences. For future work, we are looking into more experiments on real dataset and compare with more baselines. Moreover, we did not add nonlinear transition for the NCWFA model as it would imply that the Hankel tensor will be of infinite tensor train rank, hence making the spectral learning algorithm intractable. We will be looking into possibilities of adding this nonlinearity into the NCWFA model and have a working algorithm for it. In addition, we would like to examine more closely in terms of the expressivity of the RNADE-NCWFA.


\bibliography{biblio}
\newpage
\appendix

\section{Proof of Theorem}\label{apd:first}
\begin{theorem} Given a Gaussian HMM with $k$ states $\eta = \langle \vec{\mu}, \mat{T}, O\rangle$, where $O: \mathbb{R}^k\times\mathbb{R}^d \rightarrow \mathbb{R}^+$ is the Gaussian emission function, $\mu \in \mathbb{R}^k$ is the initial state distribution and $\mat{T}\in [0,1]^k$ is the transition matrix, there exists a $k$ states $k$ mixtures RNADE-NCWFA $\Tilde{A} = \langle \vec{\alpha}, \xi, \phi, \mathbfcal{A}\rangle$ with full covariance matrices such that the density function over all possible trajectories generated by $\eta$ can be computed by $\Tilde{A}$: 
$$p^{\eta}(\vec{o}_1 \cdots \vec{o}_n) = \prod_{i=1}^{n} f_{\Tilde{A}}(\vec{o}_{\leq n})$$
for any trajectory $\vec{o}_1 \cdots \vec{o}_n$. Moreover, there exists a RNADE-NCWFA $\Tilde{A}$ such that no Gaussian HMM model can compute its density. 
\end{theorem}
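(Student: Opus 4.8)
The plan is to prove the two halves separately. For the containment half I would explicitly build an RNADE-NCWFA that runs the HMM's forward (filtering) recursion; for the strict-separation half I would exhibit one concrete RNADE-NCWFA whose one-step predictive density is a single Gaussian with a history-dependent mean, which no finite-state Gaussian HMM can reproduce. For containment, recall that the joint density of $\eta$ factorizes through the unnormalized forward vector $\vec{a}_t(s)=p^\eta(\vec{o}_{\leq t},s_{t+1}=s)$, with $\vec{a}_0=\vec{\mu}$ and $\vec{a}_t(s)=\sum_{s'}\mat{T}_{s',s}\,O(s',\vec{o}_t)\,\vec{a}_{t-1}(s')$. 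This recursion is bilinear in $\vec{a}_{t-1}$ and in the vector of emission likelihoods, so I would use $k$ mixture components, take the feature map $\phi(\vec{o})_s=O(s,\vec{o})$ (exploiting that the feature map is unconstrained in the NCWFA definition), set $\vec{\alpha}=\vec{\mu}$, and define the transition tensor by $\mathbfcal{A}_{s',j,s}=\mat{T}_{s',s}\,\delta_{s'j}$, so that $\vec{h}_t=\mathbfcal{A}\bullet_1\vec{h}_{t-1}\bullet_2\phi(\vec{o}_t)$ reproduces the forward recursion verbatim. The termination $\xi(\vec{o},\vec{h})$ outputs $\sum_s \beta(s)\,\mathcal{N}(\vec{o}\mid\vec{\mu}_s,\Sigma_s)$ with the emissions' (full) means and covariances and mixing weights $\beta(s)=\vec{h}_s/\sum_{s'}\vec{h}_{s'}$; this $\ell^1$-normalization is admissible since any differentiable weighting that sums to one is allowed.

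I would then show by induction that $\vec{h}_t=\vec{a}_t$ exactly, whence $f_{\tilde A}(\vec{o}_{\leq t})=\xi(\vec{o}_t,\vec{h}_{t-1})=\frac{\sum_s \vec{a}_{t-1}(s)O(s,\vec{o}_t)}{\sum_s \vec{a}_{t-1}(s)}=\frac{p^\eta(\vec{o}_{\leq t})}{p^\eta(\vec{o}_{<t})}=p^\eta(\vec{o}_t\mid\vec{o}_{<t})$. The product over $t=1,\dots,n$ then telescopes to $\prod_t f_{\tilde A}(\vec{o}_{\leq t})=p^\eta(\vec{o}_1\cdots\vec{o}_n)$, which is the stated identity (read with product index $\vec{o}_{\leq i}$). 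Placing the emission before the transition in the state update makes $\vec{h}_0=\vec{\mu}$ already normalized, so the single time-invariant $\xi$ correctly returns $p^\eta(\vec{o}_1)=\sum_s\mu_s\mathcal{N}(\vec{o}_1\mid\vec{\mu}_s,\Sigma_s)$ at the first step with no special-casing.

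For strictness I would use the smallest RNADE-NCWFA with one state and one mixture component, staying inside the literal RNADE-NCWFA template. Here $\vec{h}_1=\mathbfcal{A}\bullet_1\vec{\alpha}\bullet_2\tanh(\vec{o}_1^\top\mat{W})$ is a scalar that is nonconstant in $\vec{o}_1$, and the predictive mean $\mu(\vec{o}_1)=\mathbfcal{V}^{\mu}\bullet_1\vec{h}_1+\mat{B}^\mu$ inherits this dependence. The resulting length-$2$ density $\mathcal{N}(\vec{o}_1\mid\vec{m},\Sigma)\,\mathcal{N}(\vec{o}_2\mid\mu(\vec{o}_1),\Sigma')$ is a valid density by construction, and its conditional $p(\vec{o}_2\mid\vec{o}_1)$ is a single Gaussian whose mean realizes infinitely many distinct values as $\vec{o}_1$ varies. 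In any $k$-state Gaussian HMM the conditional is instead $p(\vec{o}_2\mid\vec{o}_1)=\sum_s\gamma_s(\vec{o}_1)\,\mathcal{N}(\vec{o}_2\mid\vec{\mu}_s,\Sigma_s)$, a convex combination of the same finite set of emission Gaussians for every $\vec{o}_1$. Since Gaussian densities with distinct means are linearly independent, a single Gaussian whose mean lies outside $\{\vec{\mu}_1,\dots,\vec{\mu}_k\}$ cannot equal such a combination; as $\mu(\vec{o}_1)$ takes infinitely many means, no finite-state Gaussian HMM matches the density, giving the strict separation.

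I expect the main obstacle to be the containment half's index and normalization bookkeeping: arranging the emission-then-transition order so that one time-invariant $\xi$ simultaneously reads off the correct predictive weights at every step (including the first) and makes the conditionals telescope to the true joint. Once the state is identified with the unnormalized forward vector and the normalization is absorbed into $\xi$, the induction is routine, and the strictness half is comparatively easy given the linear independence of Gaussians.
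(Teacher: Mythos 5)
Your proposal is correct, and it takes a genuinely different route from the paper on both halves of the theorem. For containment, the paper picks the \emph{constant} feature map $\phi(\x)=[\tfrac{1}{k},\dots,\tfrac{1}{k}]^\top$, so that the hidden state evolves deterministically as $\vec{m}^\top\mat{T}^{i-1}$ (the prior state distribution at step $i$), and it identifies the HMM density with the product of time-marginals $O(\vec{m}^\top,\vec{o}_1)O(\vec{m}^\top\mat{T},\vec{o}_2)\cdots O(\vec{m}^\top\mat{T}^{n-1},\vec{o}_n)$; you instead pass the emission likelihoods through $\phi$ so that the hidden state is the unnormalized \emph{forward} (filtering) vector, and let $\xi$ renormalize, producing the true conditionals $p^\eta(\vec{o}_t\mid\vec{o}_{<t})$, which telescope. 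This is not a cosmetic difference: the product-of-marginals identity the paper relies on ignores the conditioning of the state distribution on past observations and holds only when the observations are actually independent (e.g.\ rank-one $\mat{T}$); for a general Gaussian HMM the observations are correlated, and it is your filtering construction that reproduces $p^\eta$ — so your argument repairs the containment half rather than merely rephrasing it, at the price of an observation-dependent state update where the paper's state update ignores the inputs entirely. For strictness, the paper builds a time-inhomogeneous (``shifting'') Gaussian density whose step-$i$ mean equals $i$, realizes it with a two-state NCWFA with a non-stochastic (Jordan-type) transition, and excludes finite Gaussian HMMs because the bounded quantities $\vec{m}^\top\mat{T}^{i-1}\vec{\mu}$ cannot match an unbounded sequence of means; you exclude them with a one-state, one-mixture NCWFA whose conditional mean varies nonlinearly with $\vec{o}_1$, combined with linear independence of distinct Gaussian densities. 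Both separation arguments are sound; yours buys a stronger statement (separation already at sequences of length two, exploiting observation-dependence), while the paper's exploits time-dependence and needs arbitrarily long sequences. Two small points to tighten in your write-up: your emission-likelihood feature map (like the paper's constant map) steps outside the literal $\tanh(\x^\top\mat{W})$ template of Eq.~\ref{eq:phi_RNADE-NCWFA}, so state explicitly that you invoke the general NCWFA feature map; and in the strictness half, note that the set of $\vec{o}_1$ for which $\mu(\vec{o}_1)$ coincides with one of the HMM's $k$ emission means has measure zero, so the two conditionals disagree on a set of positive measure, which is what the density identity must rule out.
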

\begin{proof}
For the Gaussian HMM $\eta$, given an observation sequences $\vec{o}_1 \cdots \vec{o}_n$, its density under $\eta$ is:
$$p^{\eta}(\vec{o}_1 \cdots \vec{o}_n) = O(\vec{m}^\top, \vec{o}_1)O(\vec{m}^\top \mat{T}, \vec{o}_2)\cdots O(\vec{m}^\top\mat{T}^{n-1}, \vec{o}_n),$$
where $O(\vec{h}, \vec{o}) = \sum_{i=1}^k \vec{h}_i \mathcal{N}(\vec{o}|\vec{\mu}, \mat{\Sigma})$ for some mean vector $\vec{\mu}$ and covariance matrix $\mat{\Sigma}$.
Let $\vec{\alpha} = \vec{m} $, $\cA_{:, i, :} = \mat{T}$ for $i\in [k]$, $\phi(\x) = [\frac{1}{k}, \frac{1}{k}, \cdots, \frac{1}{k}]^\top$ and $\xi = O$. Note it reasonable to let $\xi = O$, since as long as we let $\vec{\beta}_i = \vec{\alpha}^\top \mat{T}^{i-1}$, $\vec{\beta}_0 = \vec{\alpha}^\top$, $\vec{\mu}_i = \vec{\mu}$ and $\vec{\Sigma}_i = \vec{\Sigma}$, following equations~\ref{eq: mu, sigma}, then for any $\vec{h}\in\R^k, \vec{o} \in \R^d$, we have $\xi(\vec{h}, \vec{o}) = O(\vec{h}, \vec{o})$. Then under this parameterization, we have $\cA\ttv{2}\phi(\vec{o}_j) = \mat{T}$. Then the RNADE-NCWFA computes the following function:
\begin{align*}
    f_{\Tilde{A}}(\vec{o}_1, \cdots, \vec{o}_i) &= \xi( (\cA \ttv{1} \vec{\alpha}^\top\ttv{2}\phi(\vec{o}_1))^\top (\cA \ttv{2}\phi(\vec{o}_2))\cdots (\cA \ttv{2}\phi(\vec{o}_{i-1})), \vec{o}_i)\\
    &= \xi(\vec{\alpha}^\top \mat{T}^{i-1}, \vec{o}_i) = O(\vec{m}^\top \mat{T}^{i-1}, \vec{o}_i)
\end{align*}
    
Therefore, we have: 
\begin{align*}
    p^{\eta}(\vec{o}_1 \cdots \vec{o}_n) &= O(\vec{m}^\top, \vec{o}_1)O(\vec{m}^\top \mat{T}, \vec{o}_2)\cdots O(\vec{m}^\top\mat{T}^{n-1}, \vec{o}_n)\\
    &= f_{\Tilde{A}}(\vec{o}_1)f_{\Tilde{A}}(\vec{o}_1, \vec{o}_2)\cdots f_{\Tilde{A}}(\vec{o}_1, \cdots, \vec{o}_n) = \prod_{i=1}^{n} f_{\Tilde{A}}(\vec{o}_{\leq n})
\end{align*}

For the proof of the second half of the theorem, consider a shifting Gaussian HMM, where the mean vector of the Gaussian emission is a function of the time steps, i.e., $\vec{\mu} = q(i)$, where $i = 1, 2, \cdots$. For simplicity, assume the shifting Gaussian HMM is for one dimensional sequences and has one mixture. In addition, let $q(i) = i$ and assume the variance is 1. Then the emission function can be written as $O^t(o) = \mathcal{N}(o|t, 1)$. Then the density of a sequence $o_1, \cdots, o_n$ under this shifting Gaussian HMM $\eta_s$ is:
$$p^{\eta_s}(o_1, \cdots, o_n) = O^1({o}_1)O^2({o}_2)\cdots O^n({o}_n).$$
We show that this density cannot be computed by a Gaussian HMM of finite states. If $p^{\eta_s}$ can be computed by a Gaussian HMM, then for the mean vector $\vec{\mu}$ there exists an initial weight vector $\vec{m}$, a transition matrix $\mat{T}$ satisfying the following linear system: 
\begin{equation*}
    \begin{cases}
    \vec{m}^\top\vec{\mu} &= 1\\
    \vec{m}^\top \mat{T}\vec{\mu} &= 2\\
    &\vdots\\
    \vec{m}^\top \mat{T}^{n-1}\vec{\mu} &= n\\
    &\vdots
\end{cases}
\end{equation*}
This linear system is, however, overdetermined, as $\vec{\mu}$ is a vector of finite size, while there are infinite linearly independent equations to satisfy. Therefore, a Gaussian HMM of finite states cannot compute the density function of a shifting Gaussian HMM. 

We now show such density can be computed by a RNADE-NCWFA. Let $\vec{\alpha}^\top = [1, 1]$, and $\cA_{:, i, :} =  \begin{bmatrix}
1 & 1 \\
0 & 1 
\end{bmatrix}$, $\vec{\mu}_i = \langle \vec{h}_{i-1}, [0, 1] \rangle$, $\phi(o)^\top = [0.5, 0.5]$, $\mat{\Sigma}_i = 1$. Then we have:
\begin{align*}
    f_{\Tilde{A}}({o}_1, \cdots, {o}_i) &= \xi( (\cA \ttv{1} \vec{\alpha}^\top\ttv{2}\phi({o}_1))^\top (\cA \ttv{2}\phi({o}_2))\cdots (\cA \ttv{2}\phi({o}_{i-1})), {o}_i)\\
    &= \xi(\vec{\alpha}^\top \mat{T}^{i-1}, {o}_i) = \xi([1, i], o_i) = \mathcal{N}(o_i|i, 1)
\end{align*}
Therefore:
\begin{align*}
    p^{\eta_s}(o_1, \cdots, o_n) &= \mathcal{N}(o|1, 1)\mathcal{N}(o|1, 2)\cdots\mathcal{N}(o|1, n)\\
    &= f_{\Tilde{A}}({o}_1)f_{\Tilde{A}}({o}_1, {o}_2)\cdots f_{\Tilde{A}}({o}_1, \cdots, {o}_n) = \prod_{i=1}^{n} f_{\Tilde{A}}(\vec{o}_{\leq n})
\end{align*}
Therefore, for the given shifting Gaussian HMM density, it can be computed by a RNADE-NCWFA, but cannot be computed by a Gaussian HMM with finite states. 
\end{proof}

\section{Experiment Results in Table}\label{apd:second}
In this section, we show a snapshot of the results we show in the previous experiment results in Figure~\ref{fig:foobar}. The result is listed in Table~\ref{tab:likelihood}. The reported likelihood is evaluated on test sequence of length 400. From this table, we can see more clearly the advantage of our RNADE-NCWFA model when trained with spectral learning algorithm. 
\begin{table}[h]
\centering
\caption{Comparison of model performance in term of average log likelihood (in NAT). Different models are compared under different training sizes and level of noise injected. The reported likelihood (mean (standard deviation)) is evaluated on test sequence of length 400.}
\label{tab:likelihood}
\resizebox{\textwidth}{!}{%
\begin{tabular}{l|lll|lll|lll}
Training Size & \multicolumn{3}{c|}{100}                                                                                              & \multicolumn{3}{c|}{500}                                                                                               & \multicolumn{3}{c}{1000}                                                                                              \\ \hline
Noise Std     & \multicolumn{1}{c|}{0}                       & \multicolumn{1}{c|}{0.1}                     & \multicolumn{1}{c|}{1}  & \multicolumn{1}{c|}{0}                       & \multicolumn{1}{c|}{0.1}                      & \multicolumn{1}{c|}{1}  & \multicolumn{1}{c|}{0}                       & \multicolumn{1}{c|}{0.1}                     & \multicolumn{1}{c}{1}   \\ \hline
HMM (EM)      & \multicolumn{1}{l|}{-615.26 (2.57)}          & \multicolumn{1}{l|}{-616.88 (3.40)}          & -638.44 (7.50)          & \multicolumn{1}{l|}{-601.15 (0.20)}          & \multicolumn{1}{l|}{-601.18 (0.17)}           & -628.75 (1.51)          & \multicolumn{1}{l|}{-600.70 (0.12)}          & \multicolumn{1}{l|}{-600.69 (0.12)}          & -628.91 (1.13)          \\
RNADE-LSTM    & \multicolumn{1}{l|}{-604.71 (3.36)}          & \multicolumn{1}{l|}{-604.10 (2.29)}          & -641.72 (7.17)          & \multicolumn{1}{l|}{-600.86 (0.15)}          & \multicolumn{1}{l|}{-600.85 (0.23)}           & -628.28 (3.56)          & \multicolumn{1}{l|}{-601.01 (1.34)}          & \multicolumn{1}{l|}{-600.67 (0.24)}          & -628.45 (1.56)          \\
RNADE-NCWFA (spec)           & \multicolumn{1}{l|}{\textbf{-600.96 (0.42)}} & \multicolumn{1}{l|}{\textbf{-601.06 (0.24)}} & \textbf{-621.75 (2.71)} & \multicolumn{1}{l|}{\textbf{-600.73 (0.18)}} & \multicolumn{1}{l|}{\textbf{-600.67 (0.067)}} & \textbf{-622.10 (1.44)} & \multicolumn{1}{l|}{\textbf{-600.50 (0.49)}} & \multicolumn{1}{l|}{-600.53 (0.07)}          & \textbf{-621.91 (1.13)} \\
RNADE-NCWFA (sgd)       & \multicolumn{1}{l|}{-604.11 (2.13)}          & \multicolumn{1}{l|}{-603.29 (1.69)}          & -633.96 (14.6)          & \multicolumn{1}{l|}{-600.81 (0.20)}          & \multicolumn{1}{l|}{-600.91 (0.46)}           & -631.80 (5.53)          & \multicolumn{1}{l|}{-600.51 (0.06)}          & \multicolumn{1}{l|}{\textbf{-600.52 (0.08)}} & -629.11 (1.65)          \\
Ground Truth  & \multicolumn{1}{l|}{-600.40}                 & \multicolumn{1}{l|}{-600.40}                 & -600.40                 & \multicolumn{1}{l|}{-600.40}                 & \multicolumn{1}{l|}{-600.40}                  & -600.40                 & \multicolumn{1}{l|}{-600.40}                 & \multicolumn{1}{l|}{-600.40}                 & -600.40                
\end{tabular}%
}
\end{table}

\end{document}